%% file: example_paper.tex
\documentclass{article}

\usepackage{microtype}
\usepackage{amsthm}
\usepackage{graphicx}
\usepackage{subcaption}
\usepackage{booktabs} % for professional tables

\usepackage{hyperref}
\newcommand{\xhdr}[1]{\noindent{{\bf #1 \indent}}}

\usepackage{amsmath,amsfonts,bm}

\usepackage[accepted]{icml2025}

\usepackage{amsmath}
\usepackage{amssymb}
\usepackage{mathtools}
\usepackage{amsthm}
\usepackage{authblk}
\usepackage{enumitem}
\usepackage[capitalize,noabbrev]{cleveref}

\theoremstyle{plain}
\newtheorem{theorem}{Theorem}[section]
\newtheorem{proposition}[theorem]{Proposition}

\theoremstyle{definition}

\theoremstyle{remark}

\usepackage[textsize=tiny]{todonotes}

\icmltitlerunning{IBCircuit: Towards Holistic Circuit Discovery  with Information Bottleneck}

\begin{document}

\twocolumn[
\icmltitle{IBCircuit: Towards Holistic Circuit Discovery  with Information Bottleneck}

% It is OKAY to include author information, even for blind
% submissions: the style file will automatically remove it for you
% unless you've provided the [accepted] option to the icml2025
% package.

% List of affiliations: The first argument should be a (short)
% identifier you will use later to specify author affiliations
% Academic affiliations should list Department, University, City, Region, Country
% Industry affiliations should list Company, City, Region, Country

% You can specify symbols, otherwise they are numbered in order.
% Ideally, you should not use this facility. Affiliations will be numbered
% in order of appearance and this is the preferred way.
\icmlsetsymbol{equal}{*}

\begin{icmlauthorlist}
\icmlauthor{Tian Bian}{equal,cuhk,damo}
\icmlauthor{Yifan Niu}{equal,hkustgz}
\icmlauthor{Chaohao Yuan}{cuhk}
\icmlauthor{Chengzhi Piao}{hkbu}
\icmlauthor{Bingzhe Wu}{szu}
\icmlauthor{Long-Kai Huang}{tencent}
\icmlauthor{Yu Rong}{damo,hupan}
\icmlauthor{Tingyang Xu}{damo,hupan}
\icmlauthor{Hong Cheng}{cuhk}
\icmlauthor{Jia Li}{hkustgz}
\end{icmlauthorlist}

\icmlaffiliation{cuhk}{Department of Systems Engineering and Engineering Management, The Chinese University of Hong Kong, Hong Kong, China}
\icmlaffiliation{damo}{DAMO Academy, Alibaba Group, Hangzhou, China}
\icmlaffiliation{hkustgz}{Hong Kong University of Science and Technology (Guangzhou), Guangzhou, China}
\icmlaffiliation{hupan}{Hupan Lab, Hangzhou, China}
\icmlaffiliation{hkbu}{Department of Computer Science, Hong Kong Baptist University, Hong Kong, China}
\icmlaffiliation{szu}{School of Artificial Intelligence, Shenzhen University, Shenzhen, China}
\icmlaffiliation{tencent}{Tencent, Shenzhen, China}

\icmlcorrespondingauthor{Tingyang Xu}{xuty\_007@hotmail.com}
\icmlcorrespondingauthor{Jia Li}{jialee@ust.hk}

% You may provide any keywords that you
% find helpful for describing your paper; these are used to populate
% the "keywords" metadata in the PDF but will not be shown in the document
\icmlkeywords{Machine Learning, ICML}

\vskip 0.3in
]

% this must go after the closing bracket ] following \twocolumn[ ...

% This command actually creates the footnote in the first column
% listing the affiliations and the copyright notice.
% The command takes one argument, which is text to display at the start of the footnote.
% The \icmlEqualContribution command is standard text for equal contribution.
% Remove it (just {}) if you do not need this facility.

%\printAffiliationsAndNotice{}  % leave blank if no need to mention equal contribution
\printAffiliationsAndNotice{\icmlEqualContribution} % otherwise use the standard text.

\begin{abstract}
Circuit discovery has recently attracted attention as a potential research direction to explain the non-trivial behaviors of language models. It aims to find the computational subgraphs, also known as \emph{circuits}, within the model that are responsible for solving specific tasks. However, most existing studies overlook the holistic nature of these circuits and require designing specific corrupted activations for different tasks, which is inaccurate and inefficient. In this work, we propose an end-to-end approach based on the principle of Information Bottleneck, called IBCircuit, to identify informative circuits holistically. IBCircuit is an optimization framework for holistic circuit discovery and can be applied to any given task without tediously corrupted activation design. In both the Indirect Object Identification (IOI) and Greater-Than tasks, IBCircuit identifies more faithful and minimal circuits in terms of critical node components and edge components compared to recent related work. The code is available at \href{https://github.com/ivanniu/IBCircuit}{https://github.com/ivanniu/IBCircuit}.
\end{abstract}

\section{Introduction}\label{intro}
Circuit discovery in language models usually involves identifying the subgraphs (\emph{circuits}) within the model that are responsible for solving specific tasks~\citep{olah2020zoom}.  Previous efforts to identify circuits within language models have led to the discovery of components (e.g., attention heads and MLPs in Transformers~\citep{vaswani2017attention}) that either partially or fully explain the model's behaviors on tasks like Indirect Object Identification (IOI), modular arithmetic, and forecasting subsequent dates~\citep{wang2022interpretability,nanda2023progress,hanna2024does}. 
% The challenge of circuit discovery lies in the fact that language models perform as complex black boxes which involve complex non-linear interactions in densely-connected layers and embed in a high-dimensional space~\citep{wang2022interpretability}.
The challenge of circuit discovery lies in the fact that circuits are hidden in the complex black boxes of language models which involve complex non-linear interactions in densely-connected layers and embed in a high-dimensional space~\citep{wang2022interpretability,li2025g}.

Recent circuit discovery works seek to decode these models by reverse engineering~\citep{rauker2023toward}, such as activation patching~\cite{meng2022locating} and attribution patching~\cite{nanda2023attribution}. The activation patching~\citep{geiger2021causal,goldowsky2023localizing,conmy2023towards,wang2022interpretability} performs causal interventions independently on each component, ignoring that the circuit is a holistic system rather than an independent combination. Additionally, they do not scale well with the model size. Another research line has proposed attribution patching \citep{nanda2023attribution} and its variants \citep{chintam2023identifying, hanna2022have, marks2024sparse} to efficiently estimate the importance of each component in the computational graph based on gradient methods. However, they still need to redesign corrupted activations for different tasks, which is inconvenient and complicated.

An ideal minimal circuit should (i) contain only the essential components for specific tasks, and (ii) exclude any irrelevant or redundant components.
Intuitively, to identify high-quality circuits, we should maximize the circuit's relevance to the tasks while minimizing irrelevant components. Interestingly, the objective of circuit discovery aligns with the Information Bottleneck (IB) principle. The IB leverages Shannon mutual information to distill the compressed yet informative data distributions~\citep{tishby2000information}.  From the IB perspective, circuit discovery is to (i) find circuits that are the most informative to the specific tasks and (ii) compress them from the overall language models.

% This technique has been successfully applied in various domains, such as feature selection and representation learning, demonstrating its versatility and effectiveness. In feature selection, IB helps in identifying the most relevant features that contribute to the predictive power of a model, thereby improving performance and reducing computational costs \citep{achille2018information,kim2021drop,schulz2020restricting}. In representation learning, IB aids in creating compact and meaningful representations of data, which are crucial for tasks such as clustering, classification, and anomaly detection~\citep{luo2019significance,qian2020unsupervised,wu2020graph}. These applications highlight the broad utility of IB in enhancing model interpretability and performance.

% The Information Bottleneck (IB) leverages Shannon mutual information to quantify data distributions' compressed and informative nature~\citep{yu2022improving}. Its primary objective is to distill a compressed yet predictive representation of the input signal~\citep{tishby2000information}. This technique has been successfully applied in various domains, such as feature selection~\citep{achille2018information,kim2021drop,schulz2020restricting} and representation learning~\citep{luo2019significance,qian2020unsupervised,wu2020graph}. Inspired by these achievements, the intuitive motivation of this paper is to utilize the Information Bottleneck to globally identify the most informative components within Transformer-based language models.

In this work, we propose the Information Bottleneck Circuit (IBCircuit), a novel IB framework designed to identify the informative components within Transformer-based language models.  IBCircuit is an optimization framework for holistic circuit discovery and can be applied to any given task without tediously corrupted activation design.  To parameterize a circuit, IBCircuit injects controllable Gaussian noise with learnable \textit{IB weights} into various model components (e.g.,  the activations of attention heads and MLPs). %We assign learnable \textit{IB weights}  to each component to control the noise level. 
% \red{The rationale for this noise injection is that it modulates the information flow from the original pretrained model to the distorted version, with more significant noise leading to greater information distortion. As such, training the IBCircuit encourages the distorted model to maintain its informativeness, which implies less noise injection. This process effectively approximates the condition of information compression, with less noise injection serving to identify the most informative components. }
It modulates the flow of clean information from the original pretrained model to its distorted version.  The IBCircuit objective encourages the distorted flow to retain its informativeness. Therefore, IBCircuit preserves the most informative components with minimal noise injection. This process simulates information compression, ensuring that the most informative components are preserved while irrelevant components are filtered out.
Finally, the circuit is formed by discretizing the continuous weights to select the most informative components. The contributions of this work are as follows:
\begin{itemize}
  \item We propose a circuit discovery framework, IBCircuit, which utilizes information bottleneck to holistically identify the most informative and compressed circuit within the language models. 
  \item We introduce a circuit parameterization strategy that incorporates noise injection with learnable \textit{IB weights}, providing a task-agnostic alternative to the task-specific corrupted activation construction.
  \item We conducted experiments in the Indirect Object Identification (IOI) and Greater-Than tasks, verifying that IBCircuit can identify more faithful and minimal circuits in terms of selecting critical node and edge components compared to baseline methods.
\end{itemize}

\section{Related Work}
 \paragraph{Circuit Analysis} Recent advances in circuit analysis focus on reverse-engineering neural networks through two primary methodologies: activation patching and attribution patching. The \emph{activation patching} paradigm \citep{vig2020investigating,finlayson2021causal,geiger2021causal,goldowsky2023localizing,conmy2023towards,wang2022interpretability} conducts causal interventions by modifying individual components' activations. Common implementations include overwriting activations with zeros \citep{cammarata2021curve,olsson2022context}, substituting with dataset mean values \citep{wang2022interpretability,hanna2024does}, or applying interchange interventions \citep{geiger2021causal,wu2024interpretability}. However, these approaches face two fundamental limitations: (1) They treat circuit components as independent entities, disrupting the model's holistic computation flow \citep{rauker2023toward}; (2) The substituted activations often deviate from plausible activation distributions \citep{chan2022causal}, and their sequential intervention process becomes computationally expensive for large models. In contrast, \emph{attribution patching} methods \citep{nanda2023attribution} and their variants \citep{chintam2023identifying,hanna2022have,marks2024sparse} employ gradient-based approaches to estimate component importance across the computational graph. While these techniques improve efficiency through parallelizable gradient computations, they introduce new practical challenges. Current implementations require task-specific engineering of corrupted activation patterns \citep{nanda2023attribution}, creating implementation complexity and limiting generalizability across different behaviors. This fundamental tension between intervention fidelity and computational efficiency remains unresolved in existing literature.
 
\paragraph{Information Bottleneck} The principle of the information bottleneck (IB) aims to extract a compressed yet predictive code from the input signal \citep{tishby2000information,yu2024recognizing}. \citet{alemi2016deep} initially introduced the variational information bottleneck (VIB) to deep learning interpretability research. Currently, IB and VIB primarily focus on informative representation learning and feature selection. In representation learning, researchers aim to learn a compressed representation with the IB principle \citep{luo2019significance,qian2020unsupervised,wu2020graph,liutimex}.  For feature selection, IB is used to select a subset of input features, such as image pixels or vector dimensions \citep{achille2018information,kim2021drop,schulz2020restricting}. For instance, \citet{yu2020graph} \textit{et al.} present a graph information bottleneck to identify important subgraphs. \citet{wu2023tib} \textit{et al.} introduce the Two-Stream Information Bottleneck (TIB) method, which uses a standard IB and a Reverse Information Bottleneck (RIB) to detect unknown objects. 
% Unlike previous work, we consider a rarely explored perspective: the compressed and relevant components in the model for specific behavior.

\section{Preliminaries}

\paragraph{Neural Circuits} A Neural Circuit for a given task is the minimal computation subgraph $\mathcal{C} \subset \mathcal{G}$, where $\mathcal{C}$ and $\mathcal{G}$ denote the set of components in the circuit and complete model, respectively~\citep{olah2020zoom}. In the computational graph of transformer-based language models $\mathcal{G}$, the nodes are defined as MLPs and attention heads, and the edges are defined as the dependencies between nodes~\cite{li2025large,chang2024path}. 
The objective of circuit discovery is to identify a sparse subgraph that encapsulates the behavior of the complete model for a specific task. We denote the output of the circuit, given the original and distorted information flows $x, \tilde{x}$, as $p_\mathcal{C}(y \mid x, \tilde{x})$,  while the output of the complete model is represented as $p_\mathcal{G}(y \mid x)$. Formally, the goal of circuit discovery can be expressed as follows:
\begin{equation}
\begin{aligned}
    \label{eq:objective}
    \arg \min_{\mathcal{C}} \quad & \mathbb{E}_{{(x, \tilde x)} \in \mathcal{T}} \left[ D(p_\mathcal{G}(y \mid x) \; || \; p_{\mathcal{C}} (y \mid x, \tilde x) )\right], \\ & \text{s.t. $1-|\mathcal{C}| /{|\mathcal{G}|} \ge c$}
\end{aligned}
\end{equation}
The constraint $c$ is designed to ensure a desired sparsity for the circuit. 
The set $\mathcal{T}$ represents the relevant task distribution. The objective function $D$ captures the discrepancy between the outputs of the complete model and the circuit.

% \subsection{Transformer Architecture}
% A transformer-based model~\citep{vaswani2017attention} $G : \mathcal{X} \to  \mathcal{Y}$ maps a token sequence $x = [x_1,\cdots,x_T]\in \mathcal{X}$ to a  probability distribution $y \in \mathcal{Y}$ . 
% The $i$-th token at layer $l$ is embedded as a series of hidden state vectors $h_i^{(l)}$. The input $h_i^{(0)}$ to the transformer is a sum of position $\operatorname{pos}(i)$ and token embeddings $\operatorname{emb}\left(x_i\right)$. The internal computation of hidden states $h_i^{(l)}$ in $G$ are as follows:
% \begin{equation}
%   h_i^{(l)}=h_i^{(l-1)}+a_i^{(l)}  +m_i^{(l)},
% \end{equation}
% \begin{equation}
% a_i^{(l)}  =\operatorname{attn}^{(l)}\left(h_1^{(l-1)}, h_2^{(l-1)}, \ldots, h_i^{(l-1)}\right),
% \end{equation}
% \begin{equation}\label{eq:mlp}
% m_i^{(l)}  =W_{p r o j}^{(l)} \sigma\left(W_{f c}^{(l)} \gamma\left(a_i^{(l)}+h_i^{(l-1)}\right)\right).
% \end{equation}
% For each layer, it combines global attention $a^{(l)}_i$, local MLP $m^{(l)}_i$ computed from previous layers, and the \emph{residual stream} of early states. Each layer’s MLP is a two-layer neural network parameterized by matrices $W_{p r o j}^{(l)}$ and $W_{f c}^{(l)}$, with rectifying nonlinearity $\sigma$ and normalizing nonlinearity $\gamma$.

\paragraph{Information Bottleneck}
The Information Bottleneck (IB)~\citep{tishby2000information} is to optimize the representation $Z$ to capture the minimal sufficient information within the input data $\mathcal{D}$ to predict the target $Y$. Its objective can be formulated as follows:
\begin{equation}\label{eq:ib}
\min _{\mathbb{P}(Z \mid \mathcal{D}) \in \Omega} \operatorname{IB}_\beta(\mathcal{D}, Y ; Z) \triangleq[-I(Y ; Z)+\beta I(\mathcal{D} ; Z)],
\end{equation}
where $\Omega$ defines the search space of the optimal model, and $I(\cdot;\cdot)$ denotes the mutual information~\cite{cover1999elements}. The first term $-I(Y ; Z)$ encourages $Z$ to be informative about the target $Y$, while the second term $I(\mathcal{D} ; Z)$ ensures that $Z$ does not receive irrelevant information from $\mathcal{D}$. The IB provides a critical principle for representation learning: an optimal representation should contain the minimal sufficient information for the downstream prediction task.

\section{IBCircuit}\label{method}

\subsection{Intuition: Informative Circuit}\label{method:ibc}

In language model, let us consider the specific task $X = \{x^{(i)}\}_{i=1}^N$ consisting of $N$ i.i.d. samples and the target  $Y = \{y^{(i)}\}_{i=1}^N$. We assume that the data are generated by some random process, involving an unobserved random circuit $\mathcal{C} = \{c^{(i)}\}_{i=1}^N$. 
% In circuit discovery, the goal is to find a unified circuit across given tasks, so we adopt the mean of circuit $\mathcal{C}$ as the final result.
An ideal minimal circuit (i) contains the most informative components for specific tasks, and (ii) does not include irrelevant or redundant components.  Intuitively, the goal of circuit discovery in Eq.~\eqref{eq:objective} aligns with the concept of the Information Bottleneck (IB) principle.  We denote $\mathcal{G} = \{g^{(i)}\}_{i=1}^N$ as the whole computation graph of the transformer-based model, $Y$ as the output of the model on specific tasks $X$, and $\mathcal{C}$ as the circuit composed of critical components. We reformulate Eq.~\eqref{eq:ib} to obtain the objective of IBCircuit as follows:
\begin{equation}\label{eq:ibc}
\min _{\mathbb{P}(\mathcal{C} \mid \mathcal{G})} \operatorname{IB}_\beta(\mathcal{G}, Y ; \mathcal{C}) \triangleq[-I(Y ; \mathcal{C})+\beta I(\mathcal{G} ; \mathcal{C})],
\end{equation}
The first term encourages the circuit $\mathcal{C}$ to be informative on the targets $Y$. The second term ensures that $\mathcal{C}$ receives limited information from the whole computation graph $\mathcal{G}$, i.e., minimizing task-irrelevant components. However, in practice, the mutual information in Eq.~\eqref{eq:ibc} is intractable.

\subsection{Estimation of Mutual Information} 

Exact computation of $I(Y ; \mathcal{C})$ and $I(\mathcal{G} ; \mathcal{C})$ is intractable. Hence, we introduce variational approximation to estimate the variational bounds on these two terms and propose the IBCircuit framework to find informative circuits. The derivation is detailed in Appendix~\ref{app:ib_loss}.

\xhdr{Maximizing $I(Y ; \mathcal{C})$.}
We first examine the first term $I(Y ; \mathcal{C})$ in Eq.~\eqref{eq:ibc}, which encourages $\mathcal{C}$ to be informative of the output $Y$. We derive the variational lower bound of $I(Y ; \mathcal{C})$ as follows:
\begin{proposition}[Variational lower bound of $I(Y ; \mathcal{C})$]\label{BOUND1} For the output $Y$ of the original transformer language model $\mathcal{G}$, the output $Y_{\mathcal{C}}$ of the given circuit ${\mathcal{C}}$, the variational lower
bound can be written as:
\begin{equation}\label{eqbound1}
\begin{aligned}
I(Y ; \mathcal{C}) \geq -\frac{1}{N}\sum^N_{i=1}[D_{KL}( y^{(i)} || y^{(i)}_c) + H(y^{(i)})].
\end{aligned}
\end{equation}
where $D_{KL}( \cdot || \cdot)$ is the Kullback-Leibler Divergence, $H(\cdot)$ is the entropy.
\end{proposition}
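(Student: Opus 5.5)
This follows the standard variational lower bound on mutual information of \citet{alemi2016deep}, specialized to the circuit setting. First write the mutual information through the true conditional,
\begin{equation*}
I(Y;\mathcal{C}) = \mathbb{E}_{p(y,c)}\left[\log \frac{p(y\mid c)}{p(y)}\right] = \mathbb{E}_{p(y,c)}\left[\log p(y\mid c)\right] + H(Y).
\end{equation*}
The conditional $p(y\mid c)$ is intractable. We therefore introduce a variational decoder $q(y\mid c)$, taken to be the output distribution $y_c$ of the transformer when only the circuit carries clean information and the remaining components carry the noise-distorted flow. Since $D_{KL}(p(y\mid c)\,\|\,q(y\mid c))\ge 0$, we have $\mathbb{E}_{p(y\mid c)}[\log p(y\mid c)] \ge \mathbb{E}_{p(y\mid c)}[\log q(y\mid c)]$. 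This gives the Barber--Agakov bound $I(Y;\mathcal{C}) \ge \mathbb{E}_{p(y,c)}[\log q(y\mid c)] + H(Y)$.

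Next, we rewrite the cross-entropy term against the model output. In our setting the target for sample $i$ is not a hard label but the full output distribution $y^{(i)}$ of $\mathcal{G}$. The expectation over $p(y\mid c)$ on sample $i$ is therefore an expectation under $y^{(i)}$ of $\log y_c^{(i)}$, which is $-\mathrm{CE}(y^{(i)},y_c^{(i)})$. Using the identity $\mathrm{CE}(y^{(i)},y^{(i)}_c) = D_{KL}(y^{(i)}\,\|\,y^{(i)}_c) + H(y^{(i)})$, the cross-entropy term becomes $-\frac1N\sum_i[D_{KL}(y^{(i)}\|y^{(i)}_c)+H(y^{(i)})]$ after replacing the expectation over the data by the empirical average over the $N$ i.i.d.\ samples.

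The remaining issue is the additive $H(Y)$. Since the target distribution is fixed by the pretrained model, $H(Y)$ does not depend on $\mathcal{C}$. Because $H(Y)\ge 0$ for discrete vocabulary outputs, dropping it only loosens the inequality. This yields exactly Eq.~\eqref{eqbound1}.

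The main obstacle is justifying the identification of the abstract joint $p(y,c)$ with the per-sample soft targets $y^{(i)}$ and circuit outputs $y^{(i)}_c$. This requires assuming that $c^{(i)}$ is sampled per example and that the empirical average is an unbiased estimate of the population expectation. I would state this as a modeling assumption: $Y$ given $X$ is distributed as the model output $p_\mathcal{G}(y\mid x)$, and $\mathcal{C}$ is a function of $x$ plus the injected noise. Under that assumption, the law of $Y$ given $\mathcal{C}$ for sample $i$ is $y^{(i)}$ by conditional independence of $Y$ and the noise given $x$.
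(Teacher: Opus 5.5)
Your argument is correct, and it is the standard Barber--Agakov / variational-information-bottleneck derivation. The paper never actually writes this derivation out, despite pointing to the appendix. Appendix~\ref{app:theo1} proves a different chain, $I(\mathcal{G}_s;\mathcal{C}) \le I(\mathcal{G}_s;\hat{\mathcal{G}}) \le I(\mathcal{G};\hat{\mathcal{G}}) - I(Y;\hat{\mathcal{G}})$, by Markov-chain manipulations. Appendix~\ref{app:prop2} only computes the Gaussian KL behind Proposition~\ref{BOUND2}. So there is no competing route, and your proposal supplies the missing argument.

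Be aware that the inequality as literally stated is trivial. Since $D_{KL}\ge 0$ and $H(y^{(i)})\ge 0$ for distributions over a finite vocabulary, the right-hand side is $\le 0 \le I(Y;\mathcal{C})$. All the content lives in your intermediate bound $I(Y;\mathcal{C}) \ge H(Y) - \frac1N\sum_i \mathrm{CE}(y^{(i)},y^{(i)}_c)$. Rather than discarding $H(Y)$, you can use it. Under your modeling assumption ($X$ uniform on the $N$ samples, $Y\mid X=x^{(i)}\sim y^{(i)}$, $Y\perp\mathcal{C}\mid X$), one has $H(Y)-\frac1N\sum_i H(y^{(i)}) = H(Y)-H(Y\mid X) = I(X;Y)$. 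This gives the sandwich $I(X;Y) - \frac1N\sum_i D_{KL}(y^{(i)}\|y^{(i)}_c) \le I(Y;\mathcal{C}) \le I(X;Y)$, with the upper half by data processing. That is the form that genuinely justifies ``maximizing $I(Y;\mathcal{C})$ amounts to minimizing the average KL,'' with no entropy terms left to drop by fiat.

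Two steps in your identification paragraph should be tightened.

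First, conditional independence of $Y$ and the noise given $X$ does not make the law of $Y$ given $\mathcal{C}$ equal to $y^{(i)}$. In general $p(y\mid c)=\sum_x p(x\mid c)\,p_{\mathcal{G}}(y\mid x)$ is a mixture. What conditional independence does give is $p(y\mid x,c)=p_{\mathcal{G}}(y\mid x)$, and that is all you need, because the Barber--Agakov term only involves the joint: $\mathbb{E}_{p(y,c)}[\log q(y\mid c)] = \mathbb{E}_{p(x,c)}\big[\sum_y p_{\mathcal{G}}(y\mid x)\log q(y\mid c)\big]$. If $y_c$ also depends on injected noise not contained in $c$, take $q(\cdot\mid c)$ to be the noise-averaged output. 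By concavity of $\log$, using the per-draw cross-entropy instead only loosens the bound.

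Second, unbiasedness of the empirical average does not carry an inequality over to a finite sample, since the average can exceed its mean. Instead, define the joint law to be the empirical distribution over the $N$ samples and their realized circuits $c^{(i)}$, as the paper's setup suggests, so that the average is exact.

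For the proposition as stated, both points are moot because of the nonnegativity observation above. They do matter for the nontrivial version.
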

 In general, given a language model on a specific task, the $H(y^{(i)})$ is a constant. Therefore, we can omit this term, and the lower bound of Eq.~\eqref{eqbound1} can be reformulated as $-\frac{1}{N}\sum^N_{i=1}D_{KL}( y^{(i)} || y^{(i)}_c)$. This inequality is highly intuitive,  demonstrating that maximizing $I(Y ; \mathcal{C})$ can be achieved by minimizing the KL Divergence of the output between the whole computation graph and the circuit. In other words, the circuit contains the most informative components for specific tasks.

\xhdr{Minimizing $I(\mathcal{G} ; \mathcal{C})$.} For the second term $I(\mathcal{G} ; \mathcal{C})$ in Eq.~\eqref{eq:ibc}, which ensures $\mathcal{C}$ contains minimal irrelevant or redundant information about $\mathcal{G}$.  Its variational  upper bound is as follows:
\begin{proposition}[Variational upper bound of $I(\mathcal{G} ; \mathcal{C})$]\label{BOUND2} 
For the complete transformer language model $\mathcal{G}$, and the circuit ${\mathcal{C}}$, the variational upper bound can be written as:
\begin{equation}
\begin{aligned}
    I(\mathcal{G} ; \mathcal{C}) & \leq D_{KL}[ q(\mathcal{C} | \mathcal{G}) || p(\mathcal{C})],
    % I(\mathcal{G} ; \mathcal{C}) & \leq \mathbb{E}_{G}(-\frac{\log A_G}{n} + \frac{A_G^2}{2n} + \frac{B_G^2}{2n}) \\ & =: \mathcal{L}_{MI}(\mathcal{G}; \mathcal{C}),
\end{aligned}
\end{equation}
\end{proposition}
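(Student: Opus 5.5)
The plan is to use the standard variational argument from the VIB literature \citep{alemi2016deep}. The bound should be read as an expectation over the data, i.e.\ $\mathbb{E}_{\mathcal{G}}\, D_{KL}[q(\mathcal{C}\mid\mathcal{G})\,\|\,p(\mathcal{C})]$. Here $q(\mathcal{C}\mid\mathcal{G})$ is the encoder that produces the (noisy) circuit from the full computation graph, and $p(\mathcal{C})$ is an arbitrary variational prior replacing the intractable marginal $q(\mathcal{C})=\int q(\mathcal{C}\mid\mathcal{G})\,p(\mathcal{G})\,d\mathcal{G}$.

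First, I write the mutual information in its definitional form:
\[
I(\mathcal{G};\mathcal{C})=\int p(\mathcal{G})\,q(\mathcal{C}\mid\mathcal{G})\log\frac{q(\mathcal{C}\mid\mathcal{G})}{q(\mathcal{C})}\,d\mathcal{C}\,d\mathcal{G}.
\]
Second, I insert the prior $p(\mathcal{C})$ by multiplying and dividing inside the logarithm. This splits the expression into two parts:
\[
I(\mathcal{G};\mathcal{C})=\mathbb{E}_{\mathcal{G}}\,D_{KL}[q(\mathcal{C}\mid\mathcal{G})\,\|\,p(\mathcal{C})]-\int q(\mathcal{C})\log\frac{q(\mathcal{C})}{p(\mathcal{C})}\,d\mathcal{C}.
\]
The second part is exactly $D_{KL}[q(\mathcal{C})\,\|\,p(\mathcal{C})]$. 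By Gibbs' inequality it is nonnegative, so dropping it yields the stated upper bound. Equality holds precisely when the prior matches the aggregate posterior.

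The computation is routine; the main obstacle is interpretive. One must fix what "random variables" $\mathcal{G}$ and $\mathcal{C}$ mean in the paper's setting. I would model each sample $g^{(i)}$ as the collection of clean activations of the components. The circuit $c^{(i)}$ is then their noise-injected version, $c=\lambda\odot g+(1-\lambda)\odot\epsilon$, with learnable IB weights $\lambda$ and Gaussian noise $\epsilon$. Under this model $q(\mathcal{C}\mid\mathcal{G})$ is a genuine conditional density and the integrals above are well defined; the only technical requirements are absolute continuity of $q(\mathcal{C}\mid\mathcal{G})$ with respect to $p(\mathcal{C})$ and finiteness of the entropies.

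Finally, I would note that choosing $p(\mathcal{C})$ Gaussian (e.g.\ matching the noise distribution) makes the KL term available in closed form per component. Replacing $\mathbb{E}_{\mathcal{G}}$ by the empirical average over the $N$ i.i.d.\ samples then gives the tractable surrogate used in the training loss.
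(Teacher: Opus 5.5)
Your argument is correct, and it supplies something the paper does not: the paper never proves the inequality itself. Its appendix subsection ``Derivation of MI Loss'' takes $I(\mathcal{G};\mathcal{C})\le D_{KL}[q(\mathcal{C}\mid\mathcal{G})\,\|\,p(\mathcal{C})]$ for granted from the VIB literature. It only evaluates the right-hand side for the noise-injection parameterization, with $Q_i=\mathcal{N}(\lambda_i h_i+(1-\lambda_i)\mu_i,\,(1-\lambda_i)^2\sigma_i^2)$ and $P_i=\mathcal{N}(\mu_i,\sigma_i^2)$. This gives $-\log(1-\lambda_i)+\frac{(1-\lambda_i)^2-1}{2}+\frac{\lambda_i^2(h_i-\mu_i)^2}{2\sigma_i^2}$ per component, which it then averages over the $n$ components. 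The other appendix subsection proves a different, Markov-chain inequality about an irrelevant subset $\mathcal{G}_s$.

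Your decomposition $I(\mathcal{G};\mathcal{C})=\mathbb{E}_{\mathcal{G}}D_{KL}[q(\mathcal{C}\mid\mathcal{G})\,\|\,p(\mathcal{C})]-D_{KL}[q(\mathcal{C})\,\|\,p(\mathcal{C})]$ is the missing justification. It also makes explicit two things the statement glosses over. First, the right-hand side must be averaged over $\mathcal{G}$. Second, $p(\mathcal{C})$ may be any distribution that does not depend on the particular sample, so the batch statistics $\mu_i,\sigma_i$ should be regarded as fixed population estimates.

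What the paper's computation buys is the concrete training loss, which you only gesture at. To connect the two, state that the noise is drawn independently per component and the prior is a product. Then the KL is the sum of the per-component terms above, and the paper's $1/n$ average bounds $I(\mathcal{G};\mathcal{C})/n$, a factor absorbed into $\beta$. That per-component sum is the form that genuinely follows from your argument. The paper's final regrouping into $A$ and $B$ is not an identity, since $\log$ and squaring do not commute with the sums and $A=-\sum_i(1-\lambda_i)<0$.
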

% where $A_G = \sum^n_{i=1}(1-\lambda_i)$ and $B_G = \frac{\sum^n_{i=1}\lambda_i(h_i - \mu_i)}{\sigma_i}$. 
where $q(\mathcal{C} | \mathcal{G})$ and $p(\mathcal{C})$ is the posterior and prior of the circuit.
Minimizing the upper bound of $I(\mathcal{G} ; \mathcal{C})$ constrains the irrelevant information that $\mathcal{C}$ retains from $\mathcal{G}$.  Therefore, we can select the circuit $\mathcal{C}$ from $\mathcal{G}$ after training the IBCircuit with the tractable bounds.

\begin{figure}[t]
    \centering
    \includegraphics[width=0.49\textwidth]{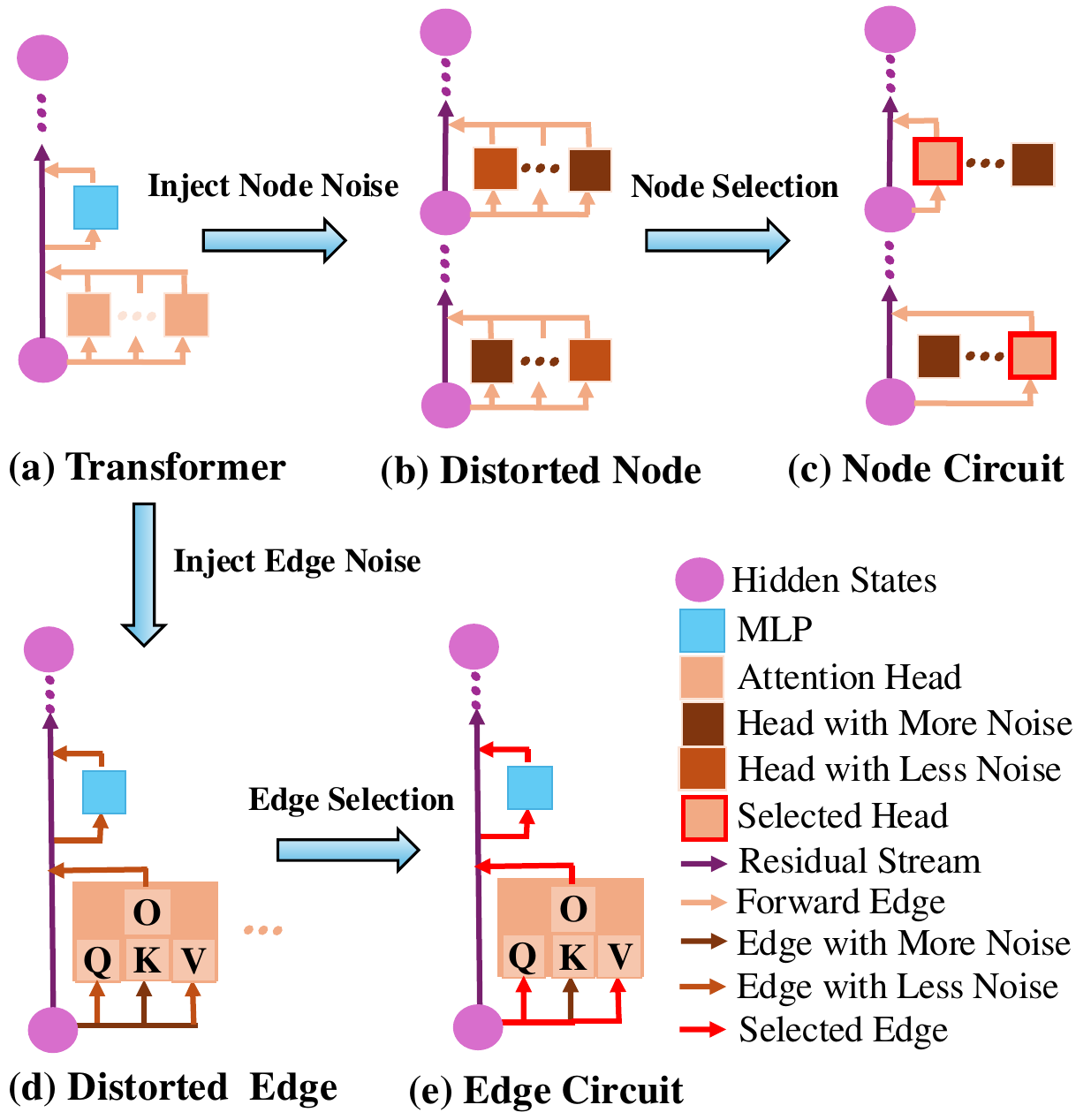}
    \caption{(a) Transformer blocks from the perspective of the residual stream.
(b) Adding Gaussian noise to the activations of attention heads using node-wise \textit{IB weights} and optimizing through the Information Bottleneck.
(c) Selecting attention heads with less noise as the node-level circuit.
(d) Adding Gaussian noise to the activations of source nodes using edge-wise \textit{IB weights} and optimizing through the Information Bottleneck.
(e) Selecting edges with less noise as the edge-level circuit.}
    \label{fig:frame}
      % \vspace{-0.5cm}
\end{figure}

\subsection{Circuit Parameterization} 

To implement IBCircuit, we follow the variational method~\cite{rockafellar2009variational} and assume the prior of the circuit to be an isotropic Gaussian. Given a pretrained model composed of $n$ components with intermediate activations $\mathbf{h} = [h_1, h_2, \cdots, h_n]$, we introduce \textit{IB weights} $\mathbf{\lambda} = [\lambda_1, \lambda_2, \cdots]$ to control information flow through $\lambda_i = \mathrm{Sigmoid}(\omega_i)$ with learnable parameter $\omega_i$.  To construct circuit, we adopt Gaussian noise $\epsilon_i\sim \mathcal{N}(\mu_i, \sigma^2_i)$ scaled by $\lambda_i \in (0,1)$ to perturb activations, where $\mu_i$ and $\sigma^2_i$ are computed from batch activations of $h_i$. Following established circuit analysis protocols \cite{wang2022interpretability,conmy2023towards}, we parameterize the circuit on node and edge level:

% To implement IBCircuit, we follow the variational method~\cite{rockafellar2009variational} and assume the prior of the circuit to be an isotropic Gaussian. Given a pretrained model composed of $n$ components with intermediate activations $\mathbf{h} = [h_1, h_2, \cdots, h_n]$, the variational approximate posterior of the $j$-th component of the circuit can be represented as $\mathcal{N}(\mu_i, \sigma_i^2)$, where $\mu_i$ and $\sigma_i$ are learnable parameters. We introduce \textit{IB weights} $\mathbf{\lambda} = [\lambda_1, \lambda_2, \cdots]$ to control information flow through $\lambda_i = \mathrm{Sigmoid}(\omega_i)$ with $\omega_i \sim \mathcal{N}(\mu_i, \sigma^2_i)$.  We adopt Gaussian noise $\epsilon_i\sim \mathcal{N}(\hat{\mu}_i, \hat{\sigma}^2_i)$ scaled by $\lambda_i \in (0,1)$ to perturb activations, where $\hat{\mu_i}$ and $\hat{\sigma^2_i}$ are computed from batch activations of $h_i$. Following established circuit analysis protocols \cite{wang2022interpretability,conmy2023towards}, we parameterize on node and edge level:

\xhdr{Node-wise Parameterization.} Following previous work on node-level circuit discovery~\cite{wang2022interpretability}, we focus on attention heads as node components. We perturb the output activations of each attention head in the Transformer architecture and train the distorted model holistically. For individual attention heads:
\begin{equation}\label{eq:node_pre}
    \hat{h}_i = \underbrace{\lambda_i h_i}_{\text{signal preservation}} + \underbrace{(1-\lambda_i)\epsilon_i}_{\text{noise injection}},
\end{equation}
where $h_i$ denotes the $i$-th attention head's activation and $\hat{h}_i$ denotes the distorted activation, $\lambda_{i}$ is node-wise \textit{IB weight}.

\xhdr{Edge-wise Parameterization.} Inspired by ACDC~\cite{conmy2023towards}, we explicitly model the Transformer's residual stream through two node types: (1) \textbf{Source Nodes}: Token/positional embeddings, attention head outputs, and MLP outputs. (2) \textbf{Target Nodes}: Q/K/V projection inputs and MLP inputs.
Following the layer-wise residual architecture, each target node at layer $l$ (denoted as $\text{trg}(l)$) aggregates information from \textit{all preceding source nodes} up to layer $l-1$ (denoted as $\text{src}(<l)$), forming directed edges $\mathcal{E} = \{(j,i) | j \in \text{src}(<l), i \in \text{trg}(l)\}$. We parameterize the edge  through:
\begin{equation}
    \hat{h}_i = \sum_{(j,i)\in\mathcal{E}} \Big[ \underbrace{\lambda_{ji}h_j}_{\text{signal propagation}} + \underbrace{(1-\lambda_{ji})\epsilon_{j}}_{\text{noise injection}} \Big],
\end{equation}
where $h_j$ denotes the $j$-th source node's activation and $\hat{h}_i$ denotes the distorted target node's activation, and $\lambda_{ji}$ denotes edge-wise \textit{IB weight}.

The IB weights $\mathbf{\lambda}$ implement differentiable information gates: $\lambda_i \to 1$ preserves original activations while $\lambda_i \to 0$ induces maximal noise. Crucially, our formulation enables \textit{simultaneous optimization} of all $\mathbf{\lambda}$ through holistic gradient flow across the computational graph and adaptive noise calibration via batch statistics. This contrasts with existing patching methods that require sequential intervention.

\subsection{Objective for Training}
The circuit $\mathcal{C}$ is designed to extract information from the pre-trained model $\mathcal{G}$ by approaching the original outputs $Y$. Our framework operates through two complementary mechanisms: (1) maximizing the mutual information between $\mathcal{C}$'s outputs and the target outputs $Y$ to maintain functional consistency, and (2) introducing controlled noise injection to minimize dependence on the pre-trained model $\mathcal{G}$, thereby preserving task-critical components while perturbing non-essential components. Considering the theoretical results in Propositions \ref{BOUND1} and \ref{BOUND2}, the variational upper bound $\operatorname{IBCircuit}_\beta(\mathcal{G}, Y ; \mathcal{C})$ can be formulated as:
\begin{equation}\label{eq:noise_ibc}
\begin{aligned} 
 &  \frac{1}{N}\sum^N_{i=1}D_{KL}( y^{(i)} || y^{(i)}_c) + \beta D_{KL}[ q(\mathcal{C} | \mathcal{G}) || p(\mathcal{C})], 
% =: &\operatorname{IBCircuit}_\beta(\mathcal{G}, Y ; \mathcal{C}),
\end{aligned}
\end{equation}
where $\beta$ is the hyperparameter. Consider our circuit parameterization, the second term $D_{KL}[ q(\mathcal{C} | \mathcal{G}) || p(\mathcal{C})]$ can be further specified as:
\begin{equation}
\begin{aligned} 
D_{KL}[ q(\mathcal{C} | \mathcal{G}) || p(\mathcal{C})] =    -\frac{\log A}{n} + \frac{A^2+B^2-1}{2n},
\end{aligned}
\end{equation}
where $A=-\sum_{i=1}^{n}(1 - \lambda_i)$, $B= \sum_{i=1}^{n}\frac{\lambda_i (h_i - \mu_i)}{\sigma_i}$, and $n$ is the number of component in the language model. Although the exact computation of $I(Y ; \mathcal{C})$ and $I(\mathcal{G} ; \mathcal{C})$ is intractable, we can calculate their variational bounds. Therefore, we adopt $\operatorname{IBCircuit}_\beta(\mathcal{G}, Y ; \mathcal{C})$ as the overall objective function. Compared to existing methods, (1) IBcircuit is an optimization framework that allows us to optimize the IB weight holistically through end-to-end training; (2) IBCircuit uses the original output $Y$ of the transformer model as supervision signals, avoiding the need to manually design corrupted activations.

\subsection{Circuit Formation}\label{method:app}
The learned IB weights $\mathbf{\lambda}$ encode component importance through their information transmission capacity. We develop circuit formation protocols for node and edge components:

\xhdr{Node Component Selection.} 
% We set a threshold $\delta$ based on the number of nodes included in the Circuit and select the attention heads with corresponding node-level \textit{IB weight} $\lambda_i > \delta$ as important node components.
Corresponding to node-wise perturbation in Eq.~\eqref{eq:node_pre}, we identify critical attention heads:
\begin{equation}
\mathcal{C}_{node}^* = \Big\{ i \in[n] \ \Big| \ \lambda_i > \tau_{\text{node}} \Big\},
\end{equation}
where the adaptive threshold $\tau_{\text{node}}$ is determined via:
\begin{equation}
\tau_{\text{node}} = \inf \Big\{ \tau \in (0,1) \ \Big| \sum_{i=1}^n \mathbb{I}(\lambda_i > \tau) \leq k_{\text{node}} \Big\},
\end{equation}
with $k_{\text{node}}$ controlling the maximum allowed nodes.

\xhdr{Edge Component Selection.}  
Similar to node component selection, we select essential residual connections through:

\begin{equation}
\mathcal{C}_{edge}^* = \Big\{ (j,i) \in\mathcal{E} \ \Big| \ \lambda_{ji} > \tau_{\text{edge}} \Big\}.
\end{equation}

The edge threshold $\tau_{\text{edge}}$ is determined via:
\begin{equation}
\tau_{\text{edge}} = \inf \Big\{ \tau \in (0,1) \ \Big| \sum_{i=1}^{|\mathcal{E}|} \mathbb{I}(\lambda_{ji} > \tau) \leq k_{\text{edge}} \Big\},
\end{equation}
with $k_{\text{edge}}$ controlling the edge sparsity. 

\section{Experiments}\label{exp}
In this section, we conduct the evaluation to answer the following research questions (RQs): 
\begin{itemize}[itemsep=5pt,topsep=0pt,parsep=0pt]
    \item \textbf{RQ1-Grounded in Previous Work}: Can IBCircuit effectively reproduce circuits taken from previous works that found the circuit explaining behavior for tasks? 
    \item \textbf{RQ2-Ablation Study}: Are both KL loss and MI loss used for training IBCircuit necessary? How does the different $\alpha$ affect the effectiveness of IBCircuit? What is the contribution of each component in the IBCircuit?
    \item \textbf{RQ3-Faithfulness \& Minimality}: Does IBCircuit avoid including components that do not participate in the behavior while maintaining better faithfulness?
    \item \textbf{RQ4-Scalability to Large Models}: Does IBCircuit have the scalability to be applied on large models? 
    %Due to the space limitation, we present this part in Appendix~\ref{app:scalable}.
    
% In addition, we present the scalability of IBCircuit to be applied on large models in the Appendix~\ref{app:scalable}.
\end{itemize}

\begin{figure*}[t]
    \centering
    \subfloat[IOI]{\includegraphics[width=0.45\linewidth]{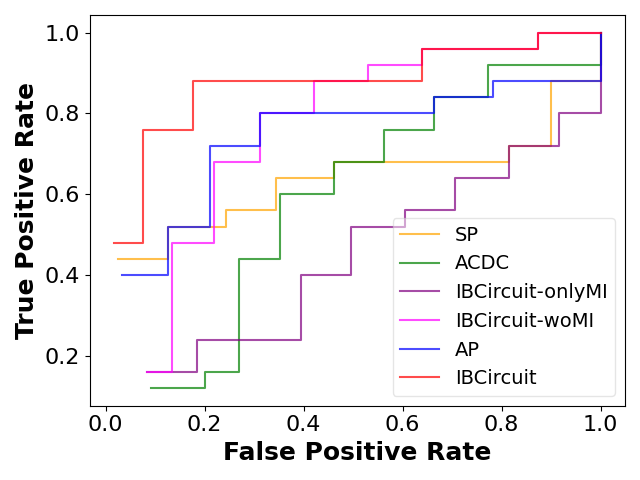}\label{fig:roc_ioi}}
    \centering
    \subfloat[Greater-Than]{\includegraphics[width=0.45\linewidth]{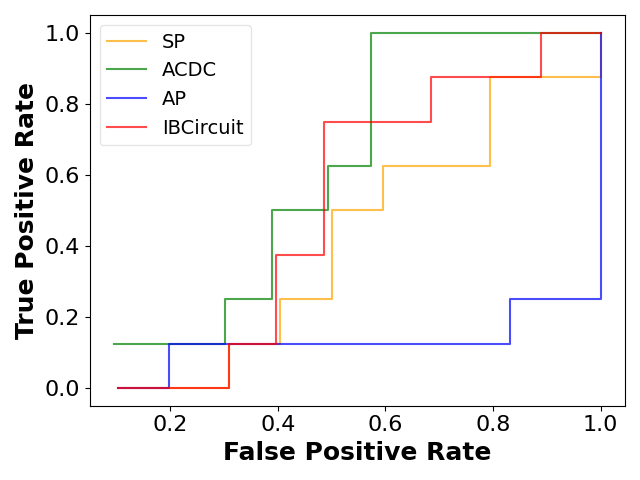}\label{fig:roc_gt}}
    \caption{ROC curves of SP, ACDC, AP and IBCircuit identifying model components from previous work, across IOI circuit and Greater-Than circuit.}
    \label{fig:roc}
\end{figure*}
\begin{figure*}[ht]
    \centering
    \subfloat[Average $\lambda$]{\includegraphics[width=0.46\linewidth]{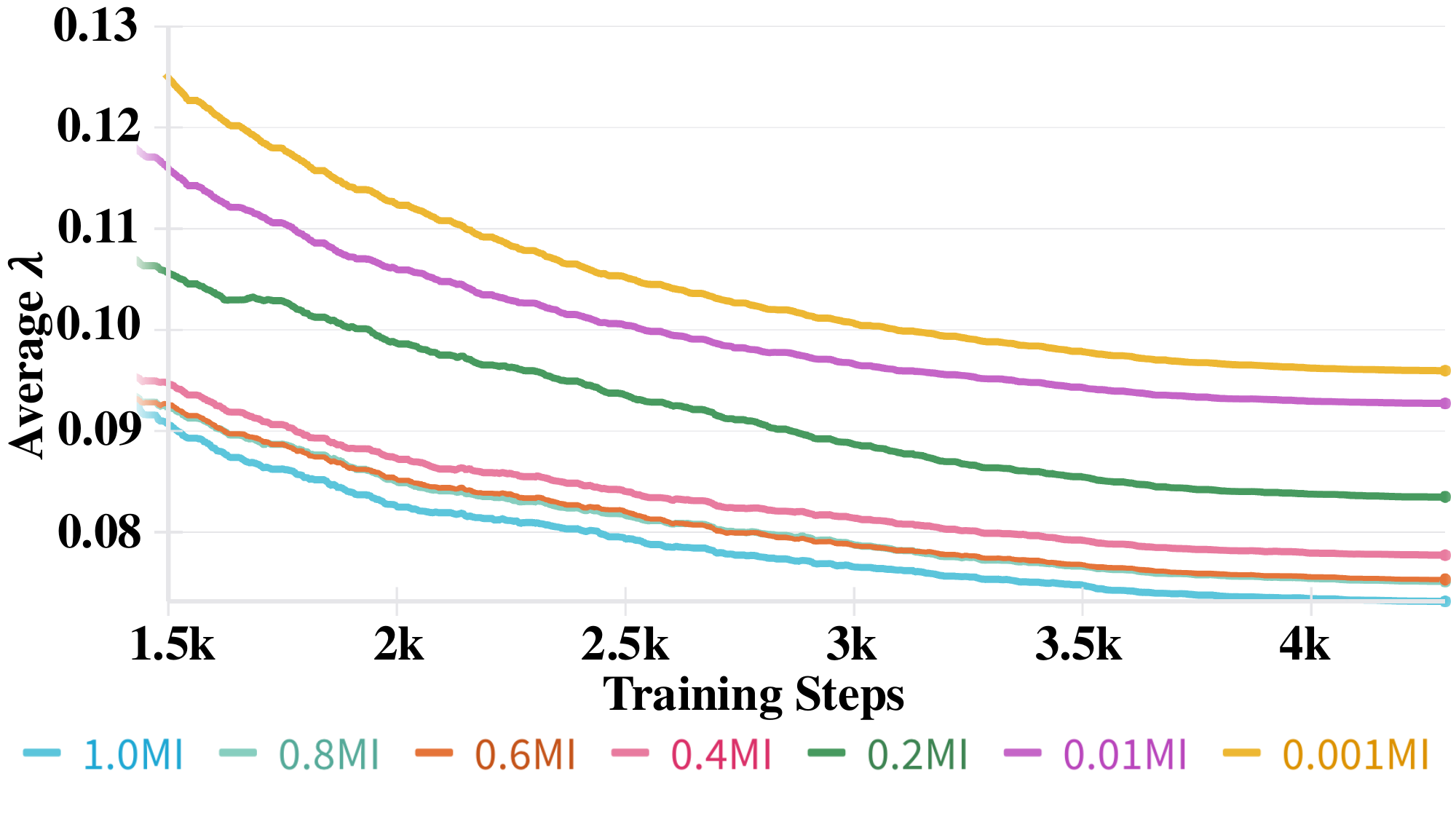}\label{fig:lambda}}
    \centering
    \subfloat[KL Divergence]{\includegraphics[width=0.46\linewidth]{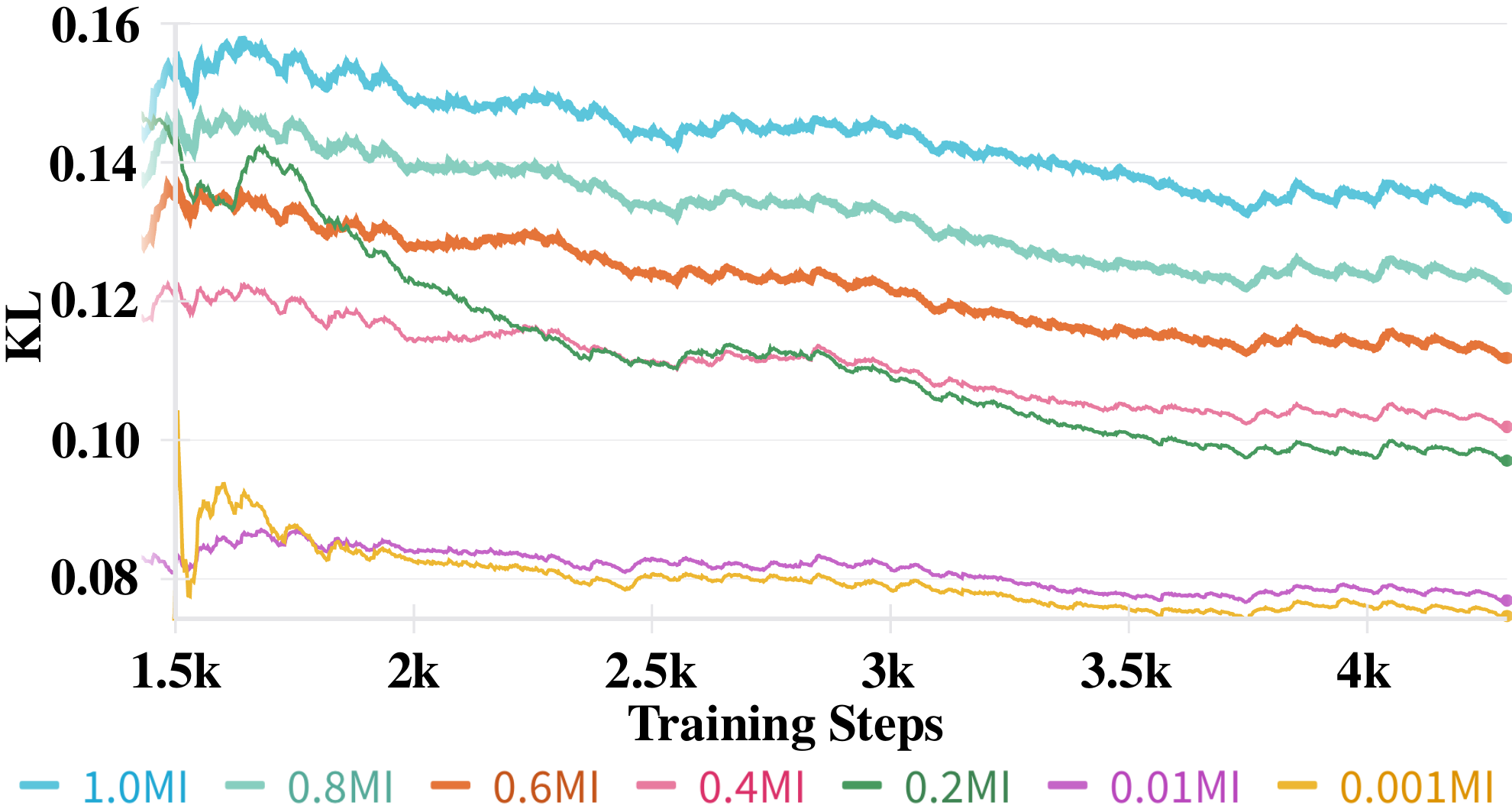}\label{fig:KL}}
    \caption{Comparison of the impact of different trade-off coefficients $\alpha$ on IBCircuit in implementing the IOI task.}
    \label{fig:ablation}
\end{figure*}

\subsection{Experiment Setting}
\xhdr{Tasks} We primarily focus on GPT-2~\cite{radford2019language} for better evaluation, as it is a classical model typically studied from a circuit's perspective. We intentionally choose two tasks (IOI and Greater-Than) that have been studied before for fair comparison with previous work.

\begin{itemize}[itemsep=6pt,topsep=0pt,parsep=0pt]
    \item \textbf{Indirect Object Identification (IOI)}~\cite{wang2022interpretability}: An IOI sentence involves an initial dependent clause, e.g., ``When Mary and John went to the store'', followed by a main clause, e.g., ``John gave a drink to Mary.'' In this case, the indirect object (IO) is ``Mary'' and the subject (S) is ``John''. The IOI task is to predict the final token in the sentence to be the IO. IOI Circuit Discovery aims to identify which components of the model are crucial for performing such IOI tasks.
    \item \textbf{Greater-Than}~\cite{hanna2024does}: In the Greater-Than task, models receive input like “The war lasted from the year 1741 to the year 17”, and must predict a valid two-digit end year, i.e., one that is greater than 41. In this paper, we aim to identify which components of the model are crucial for predicting the end year.
\end{itemize}

\xhdr{Baselines.} We compare the proposed method with the following methods designed for node component selection and edge component selection, respectively:

\xhdr{Node Component Selection Methods:}
\begin{itemize}[itemsep=6pt,topsep=0pt,parsep=0pt]
\item \textbf{Subnetwork Probing (SP)~\cite{cao2021low}}: SP learns a mask for each node in the circuit to determine if it is part of the circuit via gradient descent. 
\item \textbf{Automated Circuit DisCovery (ACDC)~\citep{conmy2023towards}}: ACDC is originally designed for edge component selection. We derive the score of a node by summarizing the impact of its removal on the model.
\item \textbf{Attribution Patching (AP)~\cite{nanda2023attribution}}: AP assigns scores to all nodes at the same time by leveraging gradient information and again prunes nodes below a certain threshold to form the final circuit.
\end{itemize}
\xhdr{Edge Component Selection Methods:}
\begin{itemize}[itemsep=6pt,topsep=0pt,parsep=0pt]
\item \textbf{Automated Circuit DisCovery (ACDC)~\cite{conmy2023towards}}: ACDC traverses the transformer’s computational graph in reverse topological order, iteratively assigning scores to edges and pruning the circuit.
\item \textbf{Edge Attribution Patching (EAP)~\cite{nanda2023attribution}}: EAP is an edge version of AP that takes into account the activations of both the source and target nodes. 
\end{itemize}

We also compared two variants of IBCircuit, namely \textbf{IBCircuit-woMI} and \textbf{IBCircuit-onlyMI} in \textbf{RQ2}, which represent IBCircuit models trained solely with KL loss and MI loss, respectively. Further details about experimental setup are in Appendix \ref{app:param_setting}.

\xhdr{Metrics} For the IOI task, we use \textit{Logit Difference} for evaluation. \textit{Logit Difference} measures the difference in logits assigned to the correct and incorrect answers. For example, for the input ``When Mary and John went to the store, John gave a drink to," we calculate logit(Mary)-logit(John). The larger the \textit{Logit Difference}, the better the performance of the model or circuit. In the Greater-Than task, we use the \textit{Greater Probability} metric, which sums the total probability assigned to all correct and incorrect options and calculates the difference, e.g., for the input ``The war lasted from the year 1741 to the year 17", we calculate $\sum_{y>41}P(y) - \sum_{y\leq41}P(y)$. A larger difference indicates better model or circuit performance. For these two tasks, we calculate the \textit{KL Divergence} between the logits output by the Circuit and the pre-trained model. A smaller \textit{KL Divergence} indicates that the results of the Circuit have greater faithfulness.

\xhdr{Circuit Ablation} We ablate the nodes that are not included in the circuit by using activation patching to evaluate the effectiveness of identified circuit. We implement randomly selected activations from the corrupted dataset for patching. In IOI task, we construct corrupted inputs by replacing IO and S with arbitrary names. In Greater-Than task, the start year’s last two digits are changed to ``01", leading models to output years prior to the start year. 

\begin{figure*}[ht]
    \centering
    \subfloat[Greater Probability]{\includegraphics[width=0.49\linewidth]
    {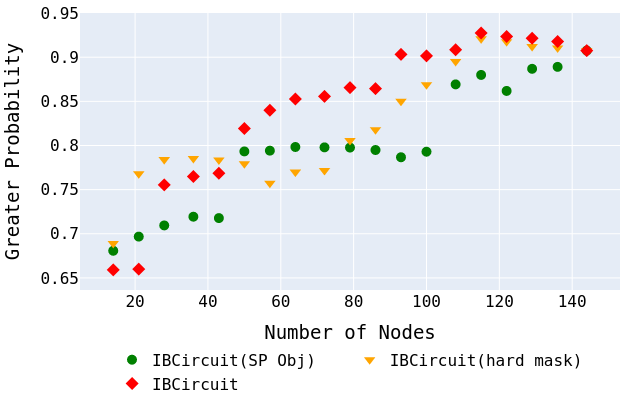}\label{fig:greaterthan_GT}}
    \centering
    \subfloat[KL Divergence]{\includegraphics[width=0.49\linewidth]
    {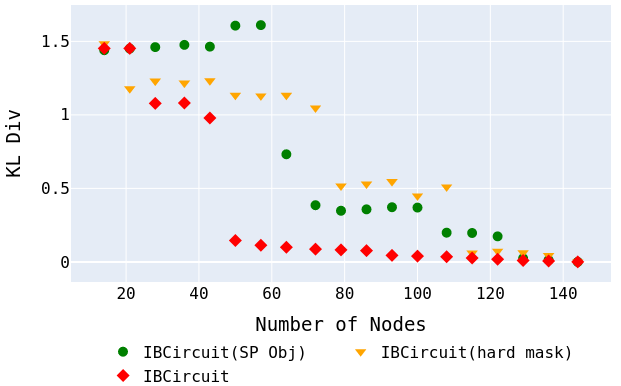}\label{fig:kl_div_GT}}
    \caption{Comparison of IBCircuit and ablation variants in terms of \textit{Greater Probability} and \textit{KL Divergence} metrics.}
    \label{fig:ab_components}
\end{figure*}

\begin{figure*}[ht]
    \centering
    \subfloat[IOI]{\includegraphics[width=0.49\linewidth]{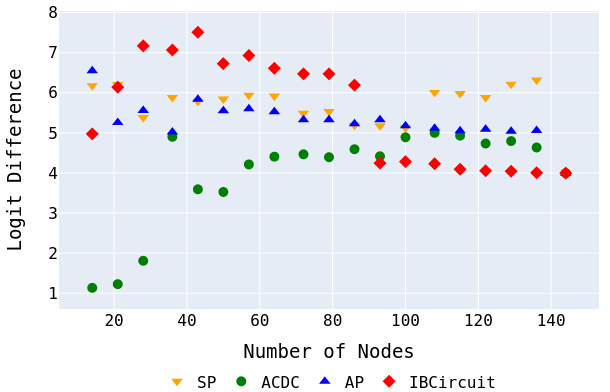}\label{fig:node_diff_ioi}}
    \centering
    \subfloat[Greater-Than]{\includegraphics[width=0.49\linewidth]{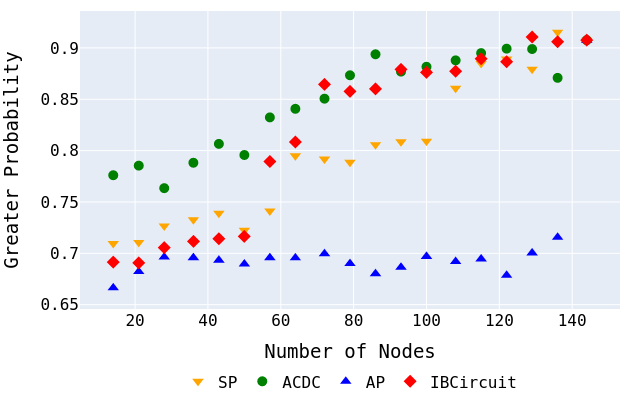}\label{fig:node_diff_gt}}
    \caption{Comparison of IBCircuit and related methods in terms of \textit{Logit Difference} and \textit{Greater Probability} metrics under different node number thresholds. Higher metric scores and fewer nodes correspond to better circuits.}
    \label{fig:minimal_diff_node}
\end{figure*}

\begin{figure*}[ht]
    \centering
    \subfloat[IOI]{\includegraphics[width=0.49\linewidth]{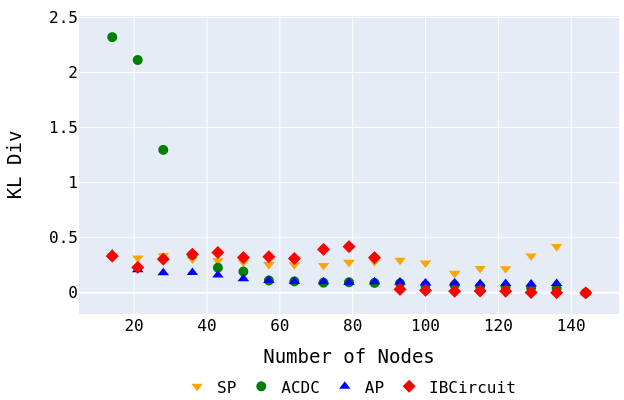}\label{fig:node_KL_ioi}}
    \centering
    \subfloat[Greater-Than]{\includegraphics[width=0.49\linewidth]{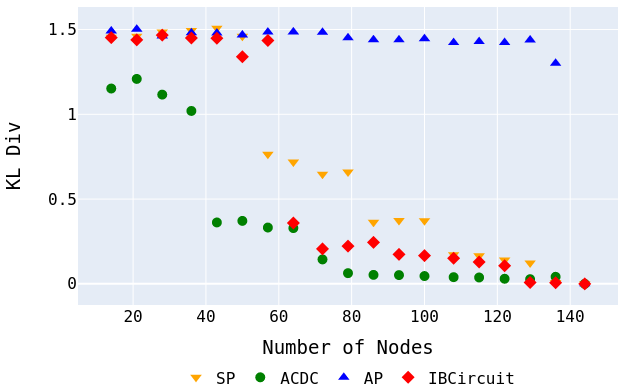}\label{fig:node_KL_gt}}
    \caption{Comparison of IBCircuit and related methods in terms of \textit{KL Divergence} metric under different node number thresholds. Lower metric scores and fewer nodes correspond to better circuits.}
    \label{fig:minimal_KL_node}
\end{figure*}

\subsection{RQ1-Grounded in Previous Work} 
Following \cite{conmy2023towards}, we formulate circuit discovery as a binary classification problem, where nodes are classified as positive (in the canonical circuit taken from previous works) or negative (not in the canonical circuit). We determine a series of thresholds for ACDC, SP, AP, and IBCircuit by varying the number of nodes from 10\% to 100\%, increasing by 10\% each time. We plot the pessimistic segments between the Pareto frontiers of TPR and FPR for each method across this range of thresholds.

Figure~\ref{fig:roc} illustrates the performance of IBCircuit in recovering the canonical circuit within GPT2-small, compared to existing methods. Our findings are as follows: i) IBCircuit shows competitive performance on both the IOI and Greater-Than tasks, notably outperforming baseline methods on the IOI task; ii) however, IBCircuit underperforms compared to ACDC on the Greater-Than task. This discrepancy might be attributed to the fact that the Greater-Than task, unlike the IOI task, does not have a clearly defined expected output. The broader range of possible correct outputs could potentially increase the learning difficulty of the IBCircuit.

% \vspace{-0.3cm}
\subsection{RQ2-Ablation Study}\label{sec:ablation}

\paragraph{The Influence of KL Loss and MI Loss} In Figure~\ref{fig:roc}, we compare the IBCircuit models trained without KL loss or MI loss. We find that: on the IOI task, IBCircuit outperforms IBCircuit-woMI and significantly surpasses IBCircuit-onlyMI. This can be intuitively explained using Information Bottleneck, as the KL loss primarily serves to align the performance of the noisy model with that of the pretrained model, while the MI loss helps reduce irrelevant information in the noisy model. Consequently, the absence of MI loss in IBCircuit-woMI results in slightly worse performance compared to IBCircuit, whereas the lack of KL loss in IBCircuit-onlyMI severely diminishes the performance.

\paragraph{Parameter Sensitivity Analysis} We further compare the impact of different trade-off coefficients $\alpha$ on IBCircuit in implementing the IOI task in Figure~\ref{fig:ablation}. In Figure~\ref{fig:lambda}, we found that as $\alpha$ increases, IBCircuit can achieve a smaller average $\lambda$ with the same number of training steps, indicating that a larger $\alpha$ can learn a sparser Circuit. Correspondingly, as shown in Figure~\ref{fig:KL}, as $\alpha$ increases, the Circuit obtained by IBCircuit with the same number of training steps has a larger KL Divergence. This indicates that during the learning process, IBCircuit will sacrifice KL divergence to obtain a sparser Circuit. How to set the hyperparameter $\alpha$ depends on whether the user wants to obtain a sparser Circuit or a Circuit with more stable KL divergence.

\begin{figure*}[ht]
    \centering
    \subfloat[IOI]{\includegraphics[width=0.48\linewidth]{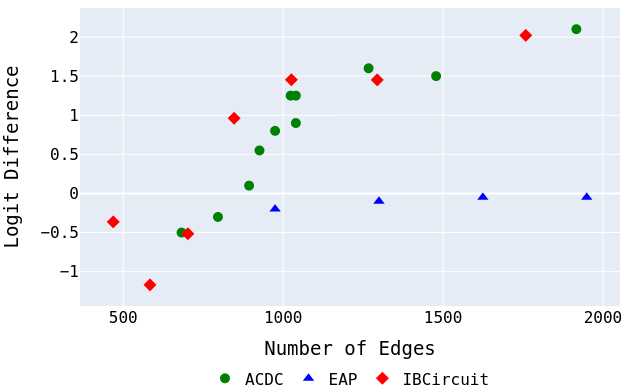}\label{fig:edge_diff_ioi}}
    \centering
    \subfloat[Greater-Than]{\includegraphics[width=0.48\linewidth]{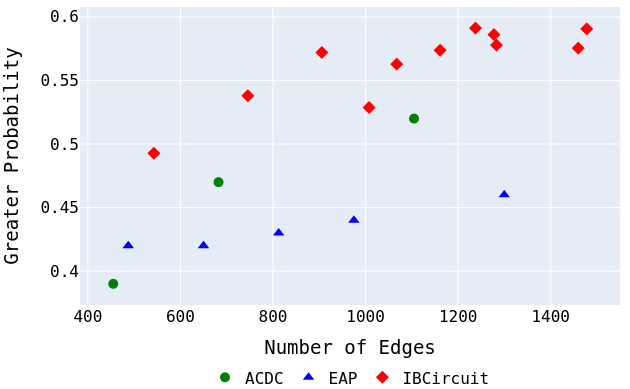}\label{fig:edge_diff_gt}}
    \caption{Comparison of IBCircuit and related methods in terms of \textit{Logit Difference} and \textit{Greater Probability} metrics under different edge number thresholds. Higher metric scores and fewer edges correspond to better circuits.}
    \label{fig:minimal_diff_edge}
\end{figure*}

\begin{figure*}[ht]
    \centering
    \subfloat[IOI]{\includegraphics[width=0.48\linewidth]{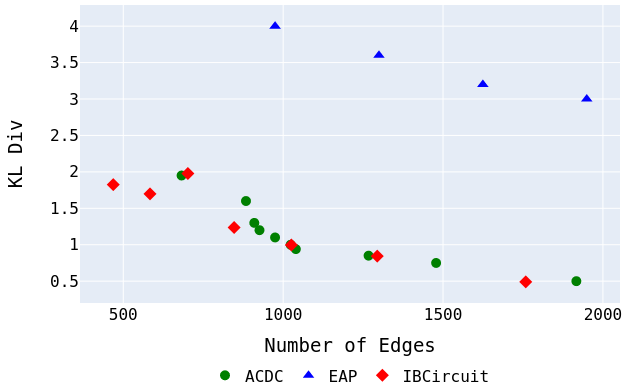}\label{fig:edge_KL_ioi}}
    \centering
    \subfloat[Greater-Than]{\includegraphics[width=0.48\linewidth]{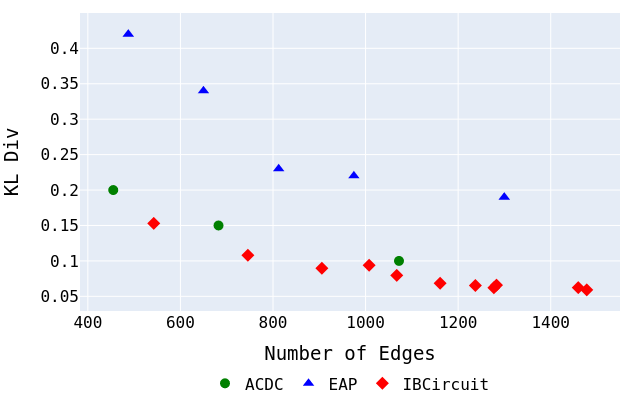}\label{fig:edge_KL_gt}}
    \caption{Comparison of IBCircuit and related methods in terms of \textit{KL Divergence} metric under different edge number thresholds. Lower metric scores and fewer edges correspond to better circuits.}
    \label{fig:minimal_KL_edge}
\end{figure*}

\paragraph{Contribution of Each Component}
To validate the contribution of each component in IBCircuit, we further compare the following two variants:
\begin{itemize}[itemsep=6pt,topsep=0pt,parsep=0pt]
\item \textbf{IBCircuit (hard mask)}: We replaced the \textit{sigmoid-based mask} used for Gaussian perturbations with \textit{Hard-concrete Masking}~\cite{cao2021low}.
\item \textbf{IBCircuit (SP Obj)}: We substituted the mutual information-based loss $L_{MI}$ with the \textit{SP (Subnetwork Probing~\cite{cao2021low}) Objective}, which penalizes the mask based on the probability of it being non-zero.
\end{itemize}
We conducted two comparative experiments on the \textbf{Greaterthan} task using the \textit{Greater Probability} and \textit{KL Divergence} metrics. The evaluation results for these two metrics are shown in Figure~\ref{fig:ab_components}. Higher \textit{Greater Probability} (Lower \textit{KL Divergence}) and fewer nodes correspond to better circuits.

\xhdr{Gaussian Perturbations vs. Hard-concrete Masking.}
We observed that, except for cases with a few nodes (fewer than about 45), the circuits optimized via Hard-concrete Masking achieved better \textit{Greater Probability} compared to IBCircuit. However, in such small-node scenarios, all methods resulted in high \textit{KL Divergence} between the identified circuits and the original pretrained model's outputs, indicating significant degradation of the circuit's capabilities. Since preserving the pretrained model's performance is critical, extreme cases with very few nodes are less meaningful. Focusing on scenarios where pretrained capabilities are maintained (cases with more nodes), IBCircuit outperforms the Hard-concrete Masking variant, demonstrating the superiority of Gaussian Perturbations.

\xhdr{Mutual Information Regularization vs. SP Objective.}
The model optimized with the \textit{SP Objective} performed comparably to IBCircuit only when the number of nodes is fewer than 20. In all other cases, it shows significantly worse performance in both \textit{Greater Probability} and \textit{KL Divergence}. This highlights that the mutual information-based regularization (\( L_{MI} \)) is more effective at preserving the pretrained model's capabilities while optimizing circuit discovery.

\begin{figure}[t]

    \centering
    \includegraphics[width=1\linewidth]{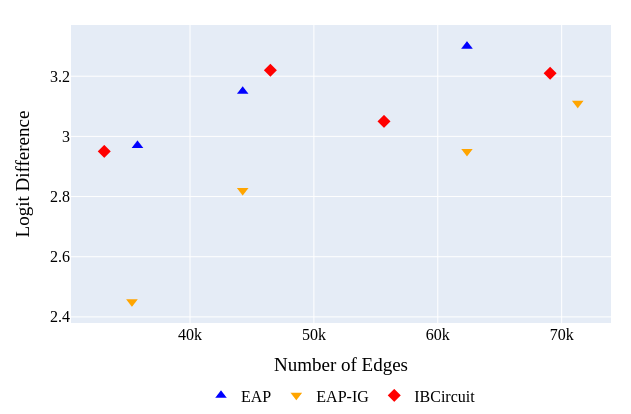}
    \caption{Comparison of IBCircuit and related methods in terms of \textit{Logit Difference} metric under different edge number thresholds. Higher metric scores and fewer edges correspond to better circuits.}
   \label{fig:ioi_diff}

\end{figure}

\subsection{RQ3-Faithfulness \& Minimality} 
Intuitively, a circuit with fewer nodes or edges that still achieves high metrics is less likely to contain components that do not participate in the behavior~\citep{conmy2023towards}. We measure the performance of various methods in terms of \textit{Logit Difference}, \textit{Greater Probability}, and \textit{KL Divergence} under different node and edge number thresholds.

Figure~\ref{fig:minimal_diff_node} and Figure~\ref{fig:minimal_KL_node} present a comparison of metric scores for various methods across different numbers of node components in the Indirect Object Identification (IOI) and Greater-Than tasks.
In the IOI task, IBCircuit achieves a higher \textit{Logit Difference} with fewer nodes, demonstrating superior efficiency in utilizing limited components to maintain predictive accuracy.
Although IBCircuit does not perform as well as other methods when the number of nodes is greater than 90, it still maintains a large \textit{Logit Difference} (around 4). This may not affect its performance compared to the pretrained model. \textbf{Additionally, the proposed method does not require manually designing corrupted data.}
In Figure~\ref{fig:minimal_KL_node}, a larger number of nodes achieved a lower \textit{KL Divergence}, which also supports the idea that when the \textit{Logit Difference} is large enough, minor differences do not affect the performance of the Circuit.
In the Greater-Than task, when the number of nodes is greater than 70, IBCircuit shows comparable performance to ACDC in terms of \textit{Greater Probability} and \textit{KL Divergence}. However, when the number of nodes is low, IBCircuit performs worse than ACDC. This may be because the Greater Than task does not have a specific token as the answer, unlike the IOI task. When the model's sparsity is highly pursued, IBCircuit finds it difficult to achieve the best faithfulness. 

For edge component selection presented in Figure~\ref{fig:minimal_diff_edge} and Figure~\ref{fig:minimal_KL_edge}, IBCircuit outperforms baseline methods in both IOI and Greater-Than tasks. In the IOI task, as the number of edges decreases, IBCircuit consistently achieves a higher \textit{Logit Difference} and lower \textit{KL Divergence} compared to ACDC and EAP. Notably, in the Greater-Than task, IBCircuit maintains a \textit{Greater Probability} above 50\% and a \textit{KL Divergence} below 0.15 even with very sparse edge counts. This highlights its efficiency and reliability in optimizing circuit performance under constrained components.

\subsection{RQ4-Scalability to Large Models}\label{app:scalable}
Following the experimental setup of EAP-IG~\cite{hanna2022have}, we compare IBCircuit with EAP-IG and EAP on the IOI task using GPT-2 XL (1.5B)~\cite{radford2019language}. As shown in the Figure~\ref{fig:ioi_diff}, IBCircuit achieves performance comparable to EAP and a better \textit{logit difference} than EAP when the number of edges ranges from 30k to 70k (approximately 97\%-98.5\% of the total 2235025 edges). This result demonstrates the scalability of IBCircuit to larger models.

\section{Conclusion}

% In this paper, we propose the IBCircuit, a novel approach that leverages the Information Bottleneck to identify critical components. Based on information bottleneck principle,  IBCircuit can effectively identify informative components of the computation graph in the language models. Our experimental results show that IBCircuit identifies more
% faithful and minimal circuits in terms of critical
% node components and edge components.

In this paper, we aim to address the challenge of understanding the behavior of Transformer-based models, which
are often seen as black boxes due to their complex computations. 
% Traditional causal interventions are inaccurate and inefficient. 
 In this work, we propose an end-to-end approach based on the principle of Information Bottleneck, called IBCircuit, to identify informative circuits holistically.  IBCircuit is an optimization framework for holistic circuit discovery and can be applied to any given task without tediously corrupted activation design. In both the Indirect Object Identification (IOI) and Greater-Than tasks, IBCircuit identifies more faithful and minimal circuits in terms of critical node components and edge components.

% The limitation of our approach is that the critical components identified by IBCircuit may vary across different tasks. Although we demonstrate its effectiveness in IOI Circuit Discovery and Greater-Than Localization, further investigation on other tasks is worth exploring. In future work, we aim to explore the applicability of IBCircuit in identifying critical components across other tasks, such as image-conditioned text generation~\cite{palit2023towards}. 

% \section*{Equal Contribution Statement}
% Tian Bian and Yifan Niu jointly proposed the IBCircuit framework. Yifan Niu introduced variational approximation, derived the theoretical results, and wrote Section \ref{intro}-\ref{method}. Both of them contributed to the early development of the IBCircuit code. Tian Bian then extended the code to the IOI and greater-than tasks, conducted the experiments on GPT-2, and wrote Section \ref{exp}. Both authors contributed equally to this work; listed alphabetically by last name.

\section*{Acknowledgment}
This research is supported by grants from the Research Grants Council of the Hong Kong Special Administrative Region, China (No. CUHK 14217622). This work was supported by Damo Academy (Hupan Laboratory) through Damo Academy (Hupan Laboratory) Innovative Research Program. The authors would like to express their gratitude to the reviewers for their feedback, which has improved the clarity and contribution of the paper.

\paragraph{Equal Contribution Statement} Tian Bian and Yifan Niu jointly proposed the IBCircuit framework. Yifan Niu introduced variational approximation, derived the theoretical results, and wrote Section \ref{intro}-\ref{method}. Both of them contributed to the early development of the IBCircuit code. Tian Bian then extended the code to the IOI and greater-than tasks, conducted the experiments on GPT-2, and wrote Section \ref{exp}. Both authors contributed equally to this work; listed alphabetically by last name.

\section*{Impact Statement}
This paper presents work whose goal is to advance the field of mechanistic interpretability. There are many potential societal consequences of our work, none of which we feel must be specifically highlighted here.

% In the unusual situation where you want a paper to appear in the
% references without citing it in the main text, use \nocite

\bibliography{example_paper}
\bibliographystyle{icml2025}

%%%%%%%%%%%%%%%%%%%%%%%%%%%%%%%%%%%%%%%%%%%%%%%%%%%%%%%%%%%%%%%%%%%%%%%%%%%%%%%
%%%%%%%%%%%%%%%%%%%%%%%%%%%%%%%%%%%%%%%%%%%%%%%%%%%%%%%%%%%%%%%%%%%%%%%%%%%%%%%
% APPENDIX
%%%%%%%%%%%%%%%%%%%%%%%%%%%%%%%%%%%%%%%%%%%%%%%%%%%%%%%%%%%%%%%%%%%%%%%%%%%%%%%
%%%%%%%%%%%%%%%%%%%%%%%%%%%%%%%%%%%%%%%%%%%%%%%%%%%%%%%%%%%%%%%%%%%%%%%%%%%%%%%
\newpage
\appendix
\onecolumn
\input{Appendix}

%%%%%%%%%%%%%%%%%%%%%%%%%%%%%%%%%%%%%%%%%%%%%%%%%%%%%%%%%%%%%%%%%%%%%%%%%%%%%%%
%%%%%%%%%%%%%%%%%%%%%%%%%%%%%%%%%%%%%%%%%%%%%%%%%%%%%%%%%%%%%%%%%%%%%%%%%%%%%%%

\end{document}

%% file: Appendix.tex
\section{Analysis of IBCircuit}\label{app:ib_loss}
\subsection{Proof of IBCircuit Objective}\label{app:theo1}
% The intuition is that injecting noises into the components of $\mathcal{G}$ is more harmful to the functionality of $\mathcal{G}$ than that into the irrelevant components. In that sense, the critical components are less likely to be injected with noise. Therefore, we can select the circuit $\mathcal{C}$ from $\hat{\mathcal{G}}$ by this criterion after training the IBCircuit. 

We justify the above formulation through the following derivation. Let $\mathcal{G}_s$ be a subset of $\mathcal{G}$, which is independent to $Y$. Denote $\mathcal{G}_\epsilon$ as the noisy subset of $\mathcal{G}$ determined by injected noise, if we select the circuit $\mathcal{C}$ by dropping $\mathcal{G}_\epsilon$ in ${\mathcal{G}}$, the following inequality holds:
\begin{equation}\label{eq:tobeproof}
    I(\mathcal{G}_s; \mathcal{C}) \leq I(\mathcal{G}_s; \hat{\mathcal{G}}) \leq I(\mathcal{G}; \hat{\mathcal{G}}) - I(Y; \hat{\mathcal{G}}).
\end{equation}
This equation indicates when setting $\alpha = 1$ in Eq.~\eqref{eq:noise_ibc}, the IBCircuit objective upper bounds the mutual information of $\mathcal{G}_s$ and $\mathcal{C}$. Hence, optimizing the IBCircuit objective encourages $\mathcal{C}$ to be less related to components in $\mathcal{G}_s$ which are irrelevant to $Y$.

\begin{proof}
% \textit{Proof.} 
We follow the proof in~\cite{yu2022improving}. Suppose $\mathcal{G}$, $\mathcal{C}$, $\mathcal{G}_s$ and $Y$ satisfy the Markov condition $(Y, \mathcal{G}_s)\rightarrow \mathcal{G} \rightarrow \mathcal{C}$~\cite{achille2018emergence}. Then we have the following inequality:
\begin{equation}\label{eq:I_C_G}
    I(\mathcal{C}; \mathcal{G}) \geq I(\mathcal{C}; Y, \mathcal{G}_s) = I(\mathcal{C}; \mathcal{G}_s) + I(\mathcal{C};Y|\mathcal{G}_s).
\end{equation}
Since $Y$ and $\mathcal{G}_s$ are independent, we have $H(Y|\mathcal{G}_s)=H(Y)$ and $H(Y|\mathcal{G}_s,\mathcal{C})\leq H(Y|\mathcal{C})$. Then we have:
\begin{equation}\label{eq:I_C_Y}
    I(\mathcal{C};Y|\mathcal{G}_s) = H(Y|\mathcal{G}_s) -H(Y|\mathcal{G}_s,\mathcal{C}) \geq H(Y) - H(Y|\mathcal{C}) = I(\mathcal{C};Y)
\end{equation}
Combine Eq.~\eqref{eq:I_C_G} and Eq.~\eqref{eq:I_C_Y}, we:
\begin{equation}\label{eq:I_C_Gs}
    I(\mathcal{C};\mathcal{G}_s)\leq I(\mathcal{C};\mathcal{G})-I(\mathcal{C};Y)
\end{equation}

Suppose $\mathcal{G}$, $\hat{\mathcal{G}}$, $\mathcal{G}_s$ and $Y$ satisfy the Markov condition $(Y, \mathcal{G}_s)\rightarrow \mathcal{G} \rightarrow \hat{\mathcal{G}}$~\cite{achille2018emergence}. Then, combine with Eq.~\eqref{eq:I_C_Gs} we have:
\begin{equation}\label{eq:I_G_Gs}
    I(\hat{\mathcal{G}};\mathcal{G}_s) \leq I(\mathcal{G}_s;\mathcal{G}) - I(\hat{\mathcal{G}};Y)
\end{equation}
$\hat{\mathcal{G}}$ is deterministic given $\mathcal{G}_\epsilon$ and $\mathcal{C}$, since we can recover $\hat{\mathcal{G}}$ by combining $\mathcal{G}_\epsilon$ with $\mathcal{C}$. Then for the left part in Eq.~\eqref{eq:I_G_Gs}, we have:
\begin{equation}\label{eq:I_hatG_Gs}
    I(\hat{\mathcal{G}}; \mathcal{G}_s) = I(\mathcal{G}_\epsilon, \mathcal{C}; \mathcal{G}_s) = I(\mathcal{C}; \mathcal{G}s) + I(\mathcal{C};\mathcal{G}_\epsilon|\mathcal{G}_s) \geq I(\mathcal{C};\mathcal{G}_s)
\end{equation}
Therefore, by combining Eq.~\eqref{eq:I_G_Gs} and Eq.~\eqref{eq:I_hatG_Gs} we have the follow inequality:
\begin{equation}
    I(\mathcal{C};\mathcal{G}_s) \leq I(\hat{\mathcal{G}}; \mathcal{G}_s) \leq I(\hat{\mathcal{G}};\mathcal{G}) - I(\hat{\mathcal{G}};Y)
\end{equation}
which proofs Eq.~\eqref{eq:tobeproof}.
\end{proof}

\subsection{Derivation of MI Loss}\label{app:prop2}
The Kullback-Leibler (KL) divergence is a measure of how one probability distribution diverges from a second, expected probability distribution. In this section, we derive the KL divergence formula between two Gaussian distributions \(Q_i\) and \(P_i\), and further calculate the average KL divergence over a batch of data.

\subsubsection{KL Divergence Formula}
Given two Gaussian distributions: $P_i:  \mathcal{N} (\mu_i,  \sigma_i)$ and $ Q_i: \mathcal{N}(\lambda_i h_i + (1 - \lambda_i) \mu_i,  (1 -  \lambda_i) \sigma_i)$. The KL divergence \(D_{\text{KL}}(Q_i \| P_i)\) is given by:
\begin{align}
D_{\text{KL}}(Q_i \| P_i) = \frac{1}{2} \left( \log \frac{\sigma_{P,i}^2}{\sigma_{Q,i}^2} + \frac{\sigma_{Q,i}^2 + (\mu_{Q,i} - \mu_{P,i})^2}{\sigma_{P,i}^2} - 1 \right)
\end{align}

Substituting the parameters:
\begin{align}
D_{\text{KL}}(Q_i \| P_i) = \frac{1}{2} \left( \log \frac{\sigma_i^2}{(1 - \lambda_i)^2 \sigma_i^2} + \frac{(1 - \lambda_i)^2 \sigma_i^2 + (\lambda_i h_i + (1 - \lambda_i) \mu_i - \mu_i)^2}{\sigma_i^2} - 1 \right)
\end{align}

Simplifying the terms:
% \[
% D_{\text{KL}}(Q_i \| P_i) = \frac{1}{2} \left( \log \frac{1}{(1 - \lambda_i)^2} + \frac{(1 - \lambda_i)^2 \sigma_i^2 + \lambda_i^2 (h_i - \mu_i)^2}{\sigma_i^2} - 1 \right)
% \]

% % \[
% % D_{\text{KL}}(Q_i \| P_i) = \frac{1}{2} \left( -2 \log (1 - \lambda_i) + (1 - \lambda_i)^2 + \frac{\lambda_i^2 (h_i - \mu_i)^2}{\sigma_i^2} - 1 \right)
% % \]

% \[
% D_{\text{KL}}(Q_i \| P_i) = -\log (1 - \lambda_i) + \frac{(1 - \lambda_i)^2 - 1}{2} + \frac{\lambda_i^2 (h_i - \mu_i)^2}{2 \sigma_i^2}
% \]

\begin{align}
D_{\text{KL}}(Q_i \| P_i) &= \frac{1}{2} \left( \log \frac{1}{(1 - \lambda_i)^2} + \frac{(1 - \lambda_i)^2 \sigma_i^2 + \lambda_i^2 (h_i - \mu_i)^2}{\sigma_i^2} - 1 \right) \\
 &= -\log (1 - \lambda_i) + \frac{(1 - \lambda_i)^2 - 1}{2} + \frac{\lambda_i^2 (h_i - \mu_i)^2}{2 \sigma_i^2}
\end{align}

\subsubsection{Average KL Divergence Over a Batch}
For a batch of \(n\) data points, the average KL divergence is:
\begin{align}
 D_{\text{KL}} = \frac{1}{n} \sum_{i=1}^{n} D_{\text{KL}}(Q_i \| P_i)
\end{align}

Substituting the expression for \(D_{\text{KL}}(Q_i \| P_i)\):
\begin{align}
 D_{\text{KL}} = \frac{1}{n} \sum_{i=1}^{n} \left( -\log (1 - \lambda_i) + \frac{(1 - \lambda_i)^2 - 1}{2} + \frac{\lambda_i^2 (h_i - \mu_i)^2}{2 \sigma_i^2} \right)
\end{align}

Breaking down the summation:
\begin{align}
 D_{\text{KL}} =  -\sum_{i=1}^{n}\frac{1}{n}\log (1 - \lambda_i) + \sum_{i=1}^{n}\frac{(1 - \lambda_i)^2 - 1}{2n} + \sum_{i=1}^{n}\frac{\lambda_i^2 (h_i - \mu_i)^2}{2n \sigma_i^2} 
\end{align}
Assuming $A_\mathcal{G}=-\sum_{i=1}^{n}(1 - \lambda_i), B_\mathcal{G}= \sum_{i=1}^{n}\frac{\lambda_i (h_i - \mu_i)}{\sigma_i}$:
\begin{align}
 D_{\text{KL}} = -\frac{1}{n}\log A_\mathcal{G} + \frac{(A_\mathcal{G})^2 - 1}{2n} + \frac{B_\mathcal{G}^2}{2n}
\end{align}

\section{Experimental Setting}\label{app:param_setting}
For node-level circuit discovery, we set the learning rate to 0.05, trained for 1300 epochs using the Adam optimizer for IB weights, and set $\alpha$ to 1. For edge-level circuit discovery, we set the learning rate to 0.1, trained for 3000 epochs using the Adam optimizer for IB weights with a learning rate warm-up scheduler (200 warm-up steps), and set $\alpha$ between 0.01 and 1 to balance the sparsity of the circuit and the KL divergence (see the analysis of the parameter $\alpha$ in Section \ref{sec:ablation}). To obtain the performance of IBCircuit with fewer edges, we selected higher values of $\alpha$.